\relax
\documentclass[letterpaper]{article} 
\usepackage{aaai20}  
\usepackage{times}  
\usepackage{helvet} 
\usepackage{courier}  
\usepackage[hyphens]{url}  
\usepackage{graphicx} 
\urlstyle{rm} 
\usepackage{graphicx}  
\frenchspacing  
\setlength{\pdfpagewidth}{8.5in}  
\setlength{\pdfpageheight}{11in}  

\usepackage{subcaption}
\usepackage{complexity}
\usepackage{tikz}
\usetikzlibrary{shapes,arrows,positioning}
\usepackage{pgfplots}
\usepackage{stmaryrd}
\usepackage{amsmath}
\usepackage{mathtools}
\usepackage{bm}
\usepackage{amssymb}
\usepackage{amsthm}
\usepackage{microtype}
\newtheorem{dfn}{Definition}

\usepackage{algorithm}
\usepackage[noend]{algpseudocode}
\makeatletter
\def\BState{\State\hskip-\ALG@thistlm}
\makeatother
\newtheorem{thm}{Theorem}

\newtheorem{rmk}{Remark}
\newtheorem{exa}{Example}

\usepackage{xcolor}
\usepackage{soul}

 \pdfinfo{
/Title (Approximate Weighted First-Order Model Counting: Exploiting Fast Approximate Model Counters and Symmetry)
/Author (Timothy van Bremen, Ondrej Kuzelka)
} 

\setcounter{secnumdepth}{0} 

%
\setlength\titlebox{2.5in} 
\title{Approximate Weighted First-Order Model Counting: \\ Exploiting Fast Approximate Model Counters and Symmetry}
\author{Timothy van Bremen\\
KU Leuven, Belgium\\
timothy.vanbremen@cs.kuleuven.be
\And
Ond\v{r}ej Ku\v{z}elka\\
Czech Technical University in Prague, Czech Republic\\
ondrej.kuzelka@fel.cvut.cz
}

\begin{document}

\maketitle

\begin{abstract}
We study the symmetric weighted first-order model counting task and present \textsf{ApproxWFOMC}, a novel anytime method for efficiently bounding the weighted first-order model count in the presence of an unweighted first-order model counting oracle. The algorithm has applications to inference in a variety of first-order probabilistic representations, such as Markov logic networks and probabilistic logic programs. Crucially for many applications, we make no assumptions on the form of the input sentence. Instead, our algorithm makes use of the symmetry inherent in the problem by imposing cardinality constraints on the number of possible true groundings of a sentence's literals. Realising the first-order model counting oracle in practice using the approximate hashing-based model counter \textsf{ApproxMC3}, we show how our algorithm outperforms existing approximate and exact techniques for inference in first-order probabilistic models. We additionally provide PAC guarantees on the generated bounds.
\end{abstract}

\section{Introduction}

Given a propositional formula $\phi$, the \textit{model counting} problem asks for the number of models (satisfying assignments) of $\phi$. Model counting is the prototypical $\#\P$-complete problem. The \textit{weighted model counting} (WMC) problem generalizes this task by associating each assignment with a real-valued \textit{weight}, and asks for the weighted sum of the formula's models. In the past several years, the WMC task has attracted great interest as an ``assembly language'' for probabilistic inference, as inference in various formalisms such as graphical models \cite{DBLP:journals/ai/ChaviraD08} and probabilistic logic programming languages \cite{DBLP:journals/tplp/FierensBRSGTJR15} can be reduced to WMC. Many practical implementations of (weighted) model counters have also been introduced, such as \textsc{Dsharp}~\cite{DBLP:conf/ai/MuiseMBH12} and \textsc{minic2d}~\cite{DBLP:conf/ijcai/OztokD15}. In addition to exact weighted model counters, another line of research has unfolded among approximate model counters \cite{DBLP:conf/cp/ChakrabortyMV13,DBLP:conf/aaai/ChakrabortyFMSV14}, which are often capable of scaling to much larger problem sizes than exact methods. 

In practice, however, logical representations of real-world domains are often first-order, and thus are typically grounded into propositional logic before a weighted model counter can be used to infer probabilities. In general, a first-order probabilistic inference task can be reduced to an instance of the \textit{weighted first-order model counting} (WFOMC) problem, in which weights are assigned to interpretations of a first-order formula. In this paper, we consider the \textit{symmetric} WFOMC problem, where weights are associated with each predicate, as opposed to the asymmetric case where each possible grounding of a predicate may have a distinct weight.

WFOMC remains a difficult task. From a complexity point of view, \citeauthor{DBLP:conf/pods/BeameBGS15}~\shortcite{DBLP:conf/pods/BeameBGS15} showed that the data complexity of symmetric WFOMC for $\text{FO}^{k}$ ($k \geq 3$) is $\#\P_1$-hard, suggesting that in general sentences with at least three distinct logical variables are not domain-liftable.\footnote{In the artificial intelligence literature, a problem is said to be \textit{domain-liftable} if inference can be performed in polynomial time in the size of the domain \cite{DBLP:conf/nips/Broeck11}.} Nevertheless, the search for practical methods for performing WFOMC remains an active area of research. These methods can be divided into \textit{grounding} and \textit{non-grounding} approaches, depending on whether they rely on grounding out the input first-order formula to its propositional counterpart. Non-grounding algorithms operate directly on the first-order representation in order to circumvent the grounding step. Such implementations include \textsc{Forclift}~\cite{DBLP:conf/ijcai/BroeckTMDR11} and \textsc{Alchemy2}~\cite{DBLP:conf/uai/GogateD11a}. Other approaches require first grounding out the problem and then passing it to a propositional weighted model counter.

One question that has received relatively little attention is how one can efficiently exploit propositional model counters in practice for first-order problems: in other words, can we leverage off-the-shelf propositional model counters for WFOMC, in an efficient manner? In this paper, we answer this question in the affirmative, and show how such a strategy can be efficiently implemented using hashing-based approximate model counters. As we shall show later, existing hashing-based approximate model counting algorithms capable of dealing with weighted instances need an exponential number of \textsf{SAT} queries in the size of the domain when dealing with grounded first-order formulas. In order to overcome this, we first propose a decomposition of the WFOMC into the weighted sum of a number of (unweighted) first-order model counts of the input formula conjoined with cardinality constraints. These cardinality constraints serve to limit the number of true instances of the formula's atoms.
We then extend our approach to an anytime iterative algorithm that uses an intuitive search procedure to find dense regions in the space of weighted models. We evaluate our approach on first-order representations of probabilistic logic programs and Markov logic networks.


\section{Background}
In this section, we begin by explaining the principles of hashing-based approximate model counting techniques. We then briefly review the syntax of first-order logic, and formally define the weighted first-order model counting problem.
\subsection{Hashing-based Approximate Model Counting}
One promising approach to the model counting problem involves exploiting universal hash functions to get approximate counts. \citeauthor{DBLP:conf/cp/ChakrabortyMV13}~\shortcite{DBLP:conf/cp/ChakrabortyMV13} proposed an algorithm, \textsf{ApproxMC}, which uses XOR-based hash functions in order to obtain an approximate model count with arbitrary tolerance and confidence guarantees. The basic working principle of this approach involves adding a XOR constraint on a random subset of the variables appearing in the formula, which cuts the number of models approximately in half. After repeating this procedure a sufficient number of times, we may compute exactly the number of models in the constrained formula, and repeat this procedure a number of times to get a good sample of the size of an average ``cell''. Multiplying the median cell size by the number of cells created from imposing the XOR constraints then gives us an approximation of the overall model count.

This work later led to the development of even more efficient model counters using the same underlying principle. In particular, \textsf{ApproxMC2} \cite{DBLP:conf/ijcai/ChakrabortyMV16} was developed which reduced the number of calls needed to a \textsf{SAT} oracle to logarithmic in the number of variables of the input. Finally, the latest revision, \textsf{ApproxMC3} \cite{DBLP:conf/aaai/SoosM19}, was developed, which processes the constructed CNF-XOR formulas in a more efficient manner.

Crucial to all of these tools is that they give PAC guarantees on the resulting model count. We follow the notation of the papers above and denote by $R_F$ the set of models of a propositional formula $F$, and by $R_{F \downarrow S}$ the projection of $R_F$ onto a subset $S$ of variables in the formula.
\begin{thm}[\citeauthor{DBLP:conf/ijcai/ChakrabortyMV16} \citeyear{DBLP:conf/ijcai/ChakrabortyMV16}]\label{thm:basic}
Given a formula $F$, sampling set $S \subseteq \textsf{Vars}(F)$, a tolerance $\varepsilon > 0$, and a confidence $1 - \delta \in (0, 1]$, \textsf{ApproxMC3} returns a count $c$ such that $P(|R_{F \downarrow S}|/(1+\varepsilon) \leq c \leq (1 + \varepsilon)|R_{F \downarrow S}|) \geq 1 - \delta$. Moreover, the number of \textsf{SAT} oracle calls required is $k \in \mathcal{O}\left(\frac{\log(|S|)\log(\frac{1}{\delta})}{\varepsilon ^ 2}\right)$.
\end{thm}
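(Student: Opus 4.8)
The plan is to follow the standard analysis of hashing-based counting, adapted to the \emph{projected} count $|R_{F\downarrow S}|$. The core object is an explicit family $\mathcal{H}$ of XOR-based hash functions $h : \{0,1\}^{|S|} \to \{0,1\}^m$ (of the form $h(x) = Ax \oplus b$ over $\mathrm{GF}(2)$), which is pairwise independent. Picking $h \in \mathcal{H}$ uniformly at random together with a target cell $\alpha \in \{0,1\}^m$, the random variable $N_{m,\alpha} = |\{y \in R_{F\downarrow S} : h(y) = \alpha\}|$ has expectation $|R_{F\downarrow S}|/2^m$ and variance at most $|R_{F\downarrow S}|/2^m$. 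A single round of the algorithm searches for the value $m^\ast$ at which the observed cell count $N_{m^\ast,\alpha}$ first drops below a threshold $\mathrm{pivot} = \Theta(1/\varepsilon^2)$, and outputs the estimator $2^{m^\ast}\cdot N_{m^\ast,\alpha}$; the whole procedure is repeated $t = \Theta(\log(1/\delta))$ times and the median is returned.

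First I would pin down the ``good'' slice: let $m_0 = \lceil \log_2 |R_{F\downarrow S}| - \log_2 \mathrm{pivot} \rceil$, so that for $m$ near $m_0$ one has $\mathbb{E}[N_{m,\alpha}] = \Theta(\mathrm{pivot}) = \Theta(1/\varepsilon^2)$. Applying Chebyshev's inequality to $N_{m,\alpha}$ at $m \in \{m_0-1, m_0\}$ shows that, after tuning the constant in $\mathrm{pivot}$, with at least some constant probability the cell count at slice $m_0$ lies in the window $[\,|R_{F\downarrow S}|/((1+\varepsilon)2^{m_0}),\ (1+\varepsilon)|R_{F\downarrow S}|/2^{m_0}\,]$, while simultaneously exceeding $\mathrm{pivot}$ at slice $m_0-1$ and falling at or below $\mathrm{pivot}$ at slice $m_0$ --- so the search stops exactly at $m^\ast = m_0$ and returns a $(1+\varepsilon)$-good estimate. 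The choice $\mathrm{pivot} = \Theta(1/\varepsilon^2)$ is exactly what makes this work, since the standard deviation of $N_{m_0,\alpha}$ is only $\sqrt{\mathbb{E}[N_{m_0,\alpha}]}$, so Chebyshev yields the desired relative accuracy only once the mean is $\Omega(1/\varepsilon^2)$.

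Next I would handle the search itself, which is where I expect the main difficulty to lie. \textsf{ApproxMC3} does not increment $m$ one unit at a time but performs a galloping/binary search over $m \in \{1,\dots,|S|\}$, and the key lemma is that the event ``cell count at slice $m$ exceeds $\mathrm{pivot}$'' is, up to a small failure probability, monotone decreasing in $m$, so the binary search correctly locates the threshold crossing near $m_0$. Establishing this needs a union bound over the $O(\log|S|)$ slices actually visited, each contributing a Chebyshev-type failure term; combined with the per-slice accuracy estimate above this gives per-round success probability at least some constant $p_0 > 1/2$. Keeping the total failure probability bounded while visiting only $O(\log|S|)$ slices --- rather than all $|S|$ of them --- is the delicate accounting step, and it is precisely what distinguishes \textsf{ApproxMC2}/\textsf{ApproxMC3} from the original linear-search \textsf{ApproxMC} and yields the $\log|S|$ (rather than $|S|$) factor in the oracle-call bound.

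Finally, to boost the constant per-round success probability to $1-\delta$, I would run $t = \Theta(\log(1/\delta))$ independent rounds and return the median; a Chernoff bound on the number of ``good'' rounds shows the median is $(1+\varepsilon)$-good except with probability at most $\delta$. For the oracle-call count: each round searches over $O(\log|S|)$ slices, at each slice the cell's model count is determined exactly only up to the cutoff $\mathrm{pivot} = O(1/\varepsilon^2)$ and hence costs $O(1/\varepsilon^2)$ \textsf{SAT} calls, and there are $t = O(\log(1/\delta))$ rounds, giving the claimed $k \in \mathcal{O}\!\left(\frac{\log(|S|)\log(1/\delta)}{\varepsilon^2}\right)$. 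The only feature specific to the projected setting is that the hash functions act solely on the $S$-variables, so $N_{m,\alpha}$ counts $R_{F\downarrow S}$ directly and no part of the argument changes; otherwise this is verbatim the analysis of Chakraborty, Meel, and Vardi.
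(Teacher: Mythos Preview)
The paper does not prove this theorem at all: it is quoted verbatim as a result of Chakraborty, Meel, and Vardi (2016), with the \textsf{ApproxMC3} implementation due to Soos and Meel (2019), and is used only as a black box in the analysis of \textsf{ApproxWFOMC}. There is therefore no in-paper proof to compare your proposal against.

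For what it is worth, your sketch is a faithful outline of the original \textsf{ApproxMC2} analysis and is essentially correct. The one place where it is slightly imprecise is the monotonicity step for the binary search: in \textsf{ApproxMC2}/\textsf{ApproxMC3} the random matrix $A$ and offset $b$ are sampled \emph{once} per round, and the slice-$m$ hash is obtained by taking the first $m$ rows of $A$ (and the first $m$ bits of $b$). Consequently the cell count $N_{m,\alpha}$ is \emph{deterministically} non-increasing in $m$, so the binary search is justified without the ``up to a small failure probability'' hedge you insert; the probabilistic accounting is needed only to control the location of the threshold crossing relative to $m_0$, not monotonicity itself. That refinement aside, the structure --- pairwise-independent XOR hashing on the $S$-variables, Chebyshev at the pivot slice with $\mathrm{pivot}=\Theta(1/\varepsilon^2)$, logarithmic search over $m$, and median-of-$\Theta(\log(1/\delta))$ amplification --- is exactly the argument in the cited work.
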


The hashing-based approach was extended to \textsf{WISH} \cite{DBLP:conf/icml/ErmonGSS13} and \textsf{WeightMC} \cite{DBLP:conf/aaai/ChakrabortyFMSV14}, which each leverage related techniques to allow for weighted model counting. In the latter paper, the authors identify a parameter, \textit{tilt}, which is the ratio of the maximum weight of all satisfying assignments to the minimum weight of all satisfying assignments, and show that their procedure runs in time polynomial in the tilt of the input formula when equipped with a \textsf{SAT} oracle. However, the tilt of the grounding of a first-order formula can grow exponentially in the size of the domain, as illustrated in the example below.

\begin{exa}\label{coin}
Let $\Phi = \forall x . Heads(x) \lor Tails(x) \land \forall x . \neg Heads(x) \lor \neg Tails(x)$, and let $w(Heads) = 0.5$, $\bar{w}(Heads) = 1$, and $w(Tails) = 0.1$, $\bar{w}(Tails) = 1$. Let $\mathbf{D} = \{coin_1, \dots, coin_n\}$. Then $tilt(\Phi) = (\frac{0.5}{0.1})^n = 5^n$.
\end{exa}

Thus, using \textsf{WeightMC} on a first-order model may require an exponential number of \textsf{SAT} queries in the size of the domain. Although \citeauthor{DBLP:conf/aaai/ChakrabortyFMSV14} \shortcite{DBLP:conf/aaai/ChakrabortyFMSV14} also describe (in Section 6 of their paper) a way to theoretically reduce the runtime by adding constraints that split the space of solutions into regions with small enough tilt, they mention that this approach would require a pseudo-boolean solver capable of efficiently handling XOR constraints, so there is no practical implementation of this theoretical extension of \textsf{WeightMC}. In this paper we show how the number of \textsf{SAT} queries can be reduced to a number polynomial in the domain size, while still obtaining a practical algorithm that enables us to scale to problem instances that are too large for exact approaches and for which there currently exist no other practical methods with PAC guarantees.

\subsection{First-order Logic}
We deal with the function-free, finite domain fragment of first-order logic. An \textit{atom} of arity $n$ takes the form $P(t_1, \dots, t_n)$, where $P/n$ comes from a vocabulary of \textit{predicates}, and each argument $t_i$ is either a constant from a finite domain $\textbf{D}$, or a logical variable from a vocabulary of variables. A \textit{literal} is an atom or its negation. A \textit{formula} is formed by connecting one or more literals together using conjunction or disjunction. A formula may optionally be surrounded by one or more quantifiers of the form $\exists x$ or $\forall x$, where $x$ is a logical variable. A logical variable in a formula is said to be \textit{free} if it does not appear in any quantifier. A formula with no free variables is called a \textit{sentence}. A \textit{clause} is a sentence consisting of a disjunction of literals. A formula is in \textit{conjunctive normal form} (CNF) if it is the conjunction of one or more clauses containing only universal quantification.\footnote{We will see later how existential quantification can be dealt with using a Skolemization procedure by \citeauthor{DBLP:conf/kr/BroeckMD14}~\shortcite{DBLP:conf/kr/BroeckMD14}.} We follow the usual semantics of first-order logic.


\subsection{Weighted First-order Model Counting}
We begin by reviewing the definition of the first-order model count of a formula. Throughout this section, we fix a sentence $\phi$ containing predicates $P_1/r_1, \dots, P_k/r_k$.
\begin{dfn}
The first-order model count of $\phi$ over a domain of size $d$ is defined as:
\[ \mathsf{FOMC}(\phi, d) = |\mathsf{models}_d(\phi)| \]
where $\mathsf{models}_d(\phi)$ denotes the set of all models of $\phi$ under the domain $\mathbf{D} = \{1, \ldots, d\}$.
\end{dfn}
In order to define the weighted first-order model count of the formula, we must first define the notion of a weighting.
\begin{dfn}
Denote the set of predicates appearing in $\phi$ by $P_\phi$. A weighting on $\phi$ is a pair of mappings $w: P_\phi \rightarrow \mathbb{R}$ and $\bar{w}: P_\phi \rightarrow \mathbb{R}$.
\end{dfn}
\begin{dfn}
Let $(w, \bar{w})$ be a weighting on $\phi$. The weighted first-order model count of $\phi$ over a domain of size $d$ under $(w, \bar{w})$ is:
\begin{align*}
\mathsf{WFOMC}(\phi, d, w, \bar{w}) = \sum_{\omega \in \mathsf{models}_d(\phi)} \prod_{l \in \omega_{T}} w(\mathsf{pred}(l))\\ \prod_{l \in \omega_{F}} \bar{w}(\mathsf{pred}(l))
\end{align*}
where $\omega_{T}$ denotes the set of true predicates in the model $\omega$, and $\omega_{F}$ the false predicates. The notation $\mathsf{pred}(l)$ maps an atom $l$ to its corresponding predicate name.
\end{dfn}

\subsection{Cardinality Constraints}
\textit{Cardinality constraints} express bounds on the number of true instances of members of a set of propositions. In this paper, we are interested in expressing this constraint on the number of true groundings of a first-order predicate.

Various encodings of such a constraint are possible. When implementing our WFOMC algorithm in practice, we will be using \textsf{ApproxMC3} to provide an \textsf{FOMC} oracle by grounding out first-order formulas, so we will need to express this constraint in propositional form. For efficiency, we employ an encoding by \citeauthor{DBLP:conf/cp/BailleuxB03}~\shortcite{DBLP:conf/cp/BailleuxB03}. This encoding splits a cardinality constraint into two parts: a \textit{totalizer}, which counts the number of true propositions as a unary-encoded number, and a \textit{comparator}, which constrains this number to lie within the bounds specified. It adds $\mathcal{O}(n \log n)$ auxiliary variables and $\mathcal{O}(n^2)$ additional clauses of length at most 3, where $n$ is the size of the constrained variable set. Helpfully for our application, the encoding adds no extraneous models: the truth values of the auxiliary variables are uniquely determined by the state of the other variables (in other words, the auxiliary variables form a \textit{dependent support}, as we define in more detail later). However, even if one were to use an alternative encoding that does add extra models the use of projection in the model counting procedure would avoid any impact on our algorithm. This is explained further later in the paper.

\section{Algorithm}
In this section, we first show how the WFOMC of a sentence can be decomposed into a series of terms by making use of cardinality constraints. Although the utility of this decomposition is limited in practice, it forms the basis for the next section, where we show how we can further take advantage of cardinality constraints to design an iterative algorithm, \textsf{ApproxWFOMC}, that computes bounds for the WFOMC. We will first assume the existence of an \textsf{FOMC} oracle, and then show how this can be implemented in practice using the hashing-based approximate model counter \textsf{ApproxMC3}.

\subsection{An Exact Decomposition of the WFOMC}

We begin by giving the decomposition of a WFOMC problem into a sum of terms.
\begin{thm}\label{decomp}
Consider a sentence $\phi$ with predicates $P_1, \dots, P_k$. Then the WFOMC of $\phi$ can be decomposed into a weighted sum of first-order model counts as:
\begin{multline*}
\mathsf{WFOMC}(\phi, d, w, \bar{w})
\\ = \sum_{(n_1,\dots,n_k) \in \mathcal{K}} \prod_{i=1}^k \left[ w(P_i)^{n_i} \bar{w}(P_i)^{r_i-n_i} \cdot \textsf{FOMC}(\phi \land \phi^{CARD}_{(n_1,\dots,n_k)}, d) \right]
\end{multline*}
where $r_i = \textsf{arity}(P_i)^d$, $\mathcal{K} = \{(n_1, \dots, n_k) \in \mathbb{N}^{k}\}$ such that $n_i \in \lbrace 0, \dots, r_i \rbrace$, and $\phi^{CARD}_{(n_1,\dots,n_k)}$ denotes the first-order cardinality constraint fixing every model of $\phi$ to have exactly $n_i$ true instances of $P_i$.
\end{thm}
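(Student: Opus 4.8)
The plan is to prove the identity by partitioning the set of models $\mathsf{models}_d(\phi)$ according to the number of true groundings of each predicate, and then observing that the symmetric weighting assigns one and the same weight to every model lying in a given part of the partition.

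First I would fix the domain $\mathbf{D} = \{1,\dots,d\}$ and, for a model $\omega \in \mathsf{models}_d(\phi)$, write $n_i(\omega)$ for the number of true groundings of $P_i$ in $\omega$. Since $P_i$ has $r_i$ possible groundings, $n_i(\omega) \in \{0,\dots,r_i\}$, so the assignment $\omega \mapsto (n_1(\omega),\dots,n_k(\omega))$ maps every model of $\phi$ into $\mathcal{K}$. This yields a partition of $\mathsf{models}_d(\phi)$ into the parts $M_{(n_1,\dots,n_k)}$, each consisting of the models of $\phi$ that have exactly $n_i$ true groundings of $P_i$ for every $i$. By the definition of $\phi^{CARD}_{(n_1,\dots,n_k)}$, this part is exactly the set of models of $\phi \land \phi^{CARD}_{(n_1,\dots,n_k)}$, so $|M_{(n_1,\dots,n_k)}| = \mathsf{FOMC}(\phi \land \phi^{CARD}_{(n_1,\dots,n_k)}, d)$.

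Next I would compute the weight of a single model $\omega \in M_{(n_1,\dots,n_k)}$. In the product $\prod_{l\in\omega_T} w(\mathsf{pred}(l)) \prod_{l\in\omega_F}\bar w(\mathsf{pred}(l))$ appearing in the definition of $\mathsf{WFOMC}$, each of the $r_i$ groundings of $P_i$ contributes a factor $w(P_i)$ if it is true and $\bar w(P_i)$ if it is false; as exactly $n_i$ of them are true, the total contribution of $P_i$ is $w(P_i)^{n_i}\bar w(P_i)^{r_i-n_i}$, and hence the weight of $\omega$ equals $\prod_{i=1}^k w(P_i)^{n_i}\bar w(P_i)^{r_i-n_i}$. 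The key point is that this value depends only on the tuple $(n_1,\dots,n_k)$ and not on $\omega$ — this is precisely where the symmetry (predicate-level, rather than grounding-level) of the weighting is exploited.

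Finally I would split the defining sum $\sum_{\omega\in\mathsf{models}_d(\phi)}$ along the partition, pull the per-part weight $\prod_{i=1}^k w(P_i)^{n_i}\bar w(P_i)^{r_i-n_i}$ out of each inner sum as a constant, and replace the remaining inner sum of ones by $|M_{(n_1,\dots,n_k)}| = \mathsf{FOMC}(\phi\land\phi^{CARD}_{(n_1,\dots,n_k)},d)$; this produces exactly the claimed expression. I do not expect a genuine obstacle: the only points that need care are verifying that the partition is exhaustive and disjoint (immediate from the definition of $n_i(\omega)$ and the range $\{0,\dots,r_i\}$) and that $\phi^{CARD}_{(n_1,\dots,n_k)}$ encodes an equality on the counts rather than an inequality, so that the parts match the constrained model counts on the nose.
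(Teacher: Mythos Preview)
Your proposal is correct and matches the paper's own reasoning: the paper does not give a formal proof of this theorem but only sketches the intuition (partition models by the number of true groundings of each predicate, observe all models in a part share the same symmetric weight, and multiply by the FOMC of the constrained formula). Your argument is simply a more careful write-up of that same idea, so there is nothing to add.
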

The intuition behind Theorem \ref{decomp} can be reasoned as follows: consider the case of a formula $\phi$ with a single predicate $P$, and suppose we add a cardinality constraint to $\phi$ to fix $P$ to have precisely $n$ true groundings. Then assuming a domain of size $d$, we must have there are $\binom{d}{n}$ different groundings each with the same weight of $w(P)^n\bar{w}(P)^{d-n}$. The formula above generalises this to multiple predicates. In practice, however, such a decomposition is typically too large to compute exactly, even though the number of terms grows polynomially in the size of the domain.

\begin{rmk}
The number of terms (and thus, \textsf{FOMC} oracle calls) in Theorem \ref{decomp} for a sentence $\phi$ over a domain of size $d$ is: \[ M(\phi, d) = \prod_{i=1}^{k} (d^{\textsf{arity}(P_i)} + 1) \]
\end{rmk}


\subsection{Approximating the WFOMC Using an Exact \textsf{FOMC} Oracle}


\paragraph{Overview} Our approach, \textsf{ApproxWFOMC}, to bounding the value of $\textsf{WFOMC}(\phi, d, w, \bar{w})$ is described in Algorithm \ref{approxwfomc}. One begins by obtaining the coarsest bounds possible for the WFOMC. This is done by computing the unweighted FOMC and multiplying by the weight obtained when all groundings of each predicate are true, or the case when all are false depending on which is larger. It is not difficult to see that this indeed gives valid bounds on the true weighted first-order model count.
\begin{exa}\label{coinstep1}
Consider again the coin toss example from Example \ref{coin}, and fix a domain of size $d = 6$. We have $\textsf{FOMC}(\phi) = 2^6 = 64$. Moreover, we know that the positive weights for both predicates are lower than their respective negative weights. Thus, we may compute the lower bound:
\begin{multline*}
    LB = w(Heads)^d \cdot w(Tails)^d \cdot \textsf{FOMC}(\phi) \\ = 0.5^6 \cdot 0.1^6 \cdot 64 = 10^{-6}.
\end{multline*}
and upper bound:
\begin{multline*}
    UB = \bar{w}(Heads)^d \cdot \bar{w}(Tails)^d \cdot \textsf{FOMC}(\phi) \\ = 1^6 \cdot 1^6 \cdot 64 = 64.
\end{multline*}
We therefore get the global bounds $(LB, UB) = (10^{-6}, 64)$ for the coarsest constraints possible $\{Heads \rightarrow (0, 6), Tails \rightarrow (0, 6)\}$.
\end{exa}
We then split the space by considering two possible cases for each weighted predicate: one where at most half of all groundings of the predicate are true, and one where at least half are true. Given $p$ weighted predicates, this means we split the space into $2^p$ parts.\footnote{Note that the number of weighted predicates is typically small for most relational models.} We can compute the FOMC for each part using cardinality constraints, and bounds on the maximum and minimum possible weights for these regions can also be computed accordingly. Then, the upper and lower bound for the WFOMC for each part can be stored in a queue that is sorted according to some heuristic function on these bounds. Most importantly, the upper bounds and the lower bounds of the two parts can be used to improve the upper and lower bounds $\textit{UB}$ and $\textit{LB}$ that we have for $\textsf{WFOMC}(\phi, d, w, \bar{w})$. Specifically, denoting the old upper and lower bounds (before splitting) as $\textit{u}$ and $\textit{l}$ and the $2^p$ new pairs of upper and lower bounds by $(l_1,u_1)$, $(l_2,u_2)$, $\dots$, $(l_{2^p},u_{2^p})$, we can update the upper bounds as $UB := UB - u + (u_1 + \dots + u_{2^p})$ and $LB := LB - l + (l_1 + \dots + l_{2^p})$.
\begin{exa}
We now take the constraints from Example \ref{coinstep1} and split it into 4 possible subconstraints: $c_1 = \{Heads \rightarrow (0, 3), Tails \rightarrow (0, 3)\}$, $c_2 = \{Heads \rightarrow (0, 3), Tails \rightarrow (4, 6)\}$, $c_3 = \{Heads \rightarrow (4, 6), Tails \rightarrow (0, 3)\}$ and $c_4 = \{Heads \rightarrow (4, 6), Tails \rightarrow (4, 6)\}$. Then imposing each of these cardinality constraints in turn gives us $\textsf{FOMC}(\phi \land \phi^{CARD}_{c_1}) = 20$, $\textsf{FOMC}(\phi \land \phi^{CARD}_{c_2}) = 22$, $\textsf{FOMC}(\phi \land \phi^{CARD}_{c_3}) = 22$, and $\textsf{FOMC}(\phi \land \phi^{CARD}_{c_4}) = 0$. We may now follow a similar process as that in the last example and compute upper and lower bounds for each of these non-overlapping regions, and push these bounds along with their respective constraints onto a queue. We now also update our global bounds $(LB, UB)$ on the WFOMC: suppose we compute the bounds $(l_i, u_i)$ for each constraint $c_i$. Then we can tighten our bounds from $(10^{-6}, 64)$ to $(l_1 + l_2 + l_3 + l_4, u_1 + u_2 + u_3 + u_4) = (2.5 \times 10^{-3} + 3.4375 \times 10^{-4} + 2.75 \times 10^{-6} + 0, 20 + 1.375 + 2.2 \times 10^{-3} + 0) = (0.0028465, 21.3772)$.
\end{exa}
The first element is then popped from the queue, and the procedure repeats until the bounds are sufficiently tight.

\paragraph{Details} The pseudocode shown in Algorithm \ref{approxwfomc} uses several operations that have not been described yet. We provide the details here. The function $\textsf{WeightedPredicates}(\phi)$ returns the set of all non-neutral predicates in $\phi$ (i.e.\ all predicates having a positive or negative weight other than 1). The procedure \textsf{DictProduct} refers to a Cartesian product of dictionaries: given a dictionary $D$ whose values are lists, it returns the list of all dictionaries such that each key $k$ is a value in the list $D[k]$. For example, given the dictionary $d = \{\textsf{foo} \rightarrow [1, 2], \textsf{bar} \rightarrow [a, b]\}$, we have $\textsf{DictProduct}(d) = [\{\textsf{foo} \rightarrow 1, \textsf{bar} \rightarrow a\}, \{\textsf{foo} \rightarrow 1, \textsf{bar} \rightarrow b\}, \{\textsf{foo} \rightarrow 2, \textsf{bar} \rightarrow a\}, \{\textsf{foo} \rightarrow 2, \textsf{bar} \rightarrow b\}]$. The notation $\phi^{CARD}_{a}$ denotes the first-order formula imposing the cardinality constraints contained in the dictionary $a$. For example, if $a = \{ P_1 \rightarrow (0,2), P_2 \rightarrow (0,1) \}$, then $\phi^{CARD}_{a}$ would impose the constraint that predicates $P_1$ and $P_2$ have at most one and two true groundings respectively.

The priority queue is sorted in decreasing order according to a heuristic function on the elements: given, a tuple $(constraints, lb, ub)$, its heuristic is computed as $|ub - lb|$. It thus splits regions with the largest gap between upper and lower bounds first. Alternative heuristics are possible: for example, one could process regions with the greatest overall model count first, ignoring bounds on the weights. However, in practice we find our heuristic works well.

\begin{algorithm}[!h]
\caption{\textsf{ApproxWFOMC}}\label{approxwfomc}
\hspace*{\algorithmicindent} \textbf{Input} First-order CNF $\phi$, weights $(w, \bar{w})$, domain size $d$, tolerance $\tau$\\
\hspace*{\algorithmicindent} \textbf{Output} $(b_1, b_2)$ such that $b_1 \leq \mathsf{WFOMC}(\phi, d, w, \bar{w}) \leq b_2$ and $\frac{b_2}{b_1} < 1 + \tau$
\begin{algorithmic}[1]
\BState \emph{{\bf /* Initialization */}}:
\State $\textit{queue} \gets \text{new priority queue}$
\State $\textit{LB},\ \textit{UB} \gets \textsf{FOMC}(\phi, d)$
\For{$P$ in $\textsf{WeightedPredicates}(\phi)$}
    \State $\xi \gets d^{\textsf{arity}(P)}$
    \State $\textit{LB} \gets \textit{LB}\cdot \min(w(P)^\xi, \bar{w}(P)^\xi)$
    \State $\textit{UB} \gets \textit{UB}\cdot \max(w(P)^\xi, \bar{w}(P)^\xi)$
	\State $\textit{constraints}[P] \gets (0, \xi)$
\EndFor
\State $\text{Store } (\textit{constraints}, \textit{newLb}, \textit{newUb})$ in $\textit{queue}$
\BState \emph{{\bf /* Main loop */}}:
\While {$\frac{newUb}{newLb} \geq 1 + \tau$ and $\textit{queue}$ is non-empty}
	\State Pop $(\textit{constraints}, \textit{lb}, \textit{ub})$ from $\textit{queue}$
	\State{/* {\em Constructing refined constraints (splitting)} */}
	\If{$\textit{constraints}$ cannot be decomposed further}
		\State {\bf continue}
	\EndIf
	\State $\textit{newConstr} \gets \{\}$
	\For{$P$ in $\textsf{WeightedPredicates}(\phi)$}
		\State $(l, u) \gets constraints[P]$
		\If{$\l \neq u$}
			\State $newConstr[P] \gets \{(l, \lfloor \frac{l+u}{2} \rfloor), (\lfloor \frac{l+u}{2} \rfloor + 1, u)\}$
		\Else
			\State $newConstr[P] \gets \{constraints[P]\}$
		\EndIf					
	\EndFor
	
	\State{/* {\em Recomputing $\textit{LB}$ and $\textit{UB}$ using $\textit{newConstr}$} */}
	\State $\textit{LB} \gets \textit{LB} - lb$
	\State $\textit{UB} \gets \textit{UB} - ub$

	\For{$\textit{refinedConstr}$ in $\textsf{DictProduct}(\textit{newConstr})$}
		\State $\textit{tMin},\ \textit{tMax} \gets 1$
		\For{$P$ in $\textsf{WeightedPredicates}(\phi)$}
		    \State $\xi \gets d^{\textsf{arity}(P)}$
		    \State $(l, u) \gets refinedConstr[P]$    		
		    \State $\textit{tMin} \gets \textit{tMin}\cdot \min(w(P)^{l}\bar{w}(P)^{\xi-l}, w(P)^{u}\bar{w}(P)^{\xi-u})$
    		\State $\textit{tMax} \gets \textit{tMax}\cdot \max(w(P)^{l}\bar{w}(P)^{\xi-l}, w(P)^{u}\bar{w}(P)^{\xi-u})$
		\EndFor
		\State $\textit{mc} \gets \textsf{FOMC}(\phi \land \phi^{CARD}_{\textit{refinedConstr}}, d)$
		\State $\textit{LB} \gets \textit{LB} + \textit{tMin}\cdot mc$
		\State $\textit{UB} \gets \textit{UB} + \textit{tMax}\cdot mc$
		\State Push $(refinedConstr, \textit{tMin}\cdot mc, \textit{tMax}\cdot mc)$ to $\textit{queue}$
	\EndFor
\EndWhile
\Return $(\textit{LB}, \textit{UB})$
\end{algorithmic}
\end{algorithm}

\subsection{Approximating the WFOMC Using an Approximate \textsf{FOMC} Oracle}

In practice we may only have access to an approximate \textsf{FOMC} oracle rather than an exact one: for example, we may wish to ground the input sentence and use \textsf{ApproxMC3} to provide such an oracle. In this case, in order to provide $\varepsilon$-$\delta$ style guarantees in \textsf{ApproxWFOMC}, we need to set the correct parameters to \textsf{ApproxMC3}.

\begin{thm}
Given a sentence $\phi$, let $(LB, UB) = \textsf{ApproxWFOMC}(\phi, w, \bar{w}, d, \tau)$. Suppose each \textsf{FOMC} oracle call is made by grounding the problem and calling \textsf{ApproxMC3} with tolerance $\varepsilon$ and confidence $\delta_i$. Then we have:
\[ \textit{Pr} \left[ \frac{\textit{LB}}{ 1 + \varepsilon } \leq \textsf{WFOMC} \left( \phi, w, \bar{w}, d \right) \leq \textit{UB} \left( 1 + \varepsilon \right) \right] \geq 1 - \delta \]
where $\delta = \sum_i \delta_i$.
\end{thm}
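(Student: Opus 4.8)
The plan is to reduce the statement to two ingredients: (i) Algorithm~\ref{approxwfomc} produces valid bounds when its \textsf{FOMC} oracle is exact (essentially the informal remark preceding Example~\ref{coinstep1}), and (ii) replacing each exact count by an \textsf{ApproxMC3} estimate perturbs every relevant quantity multiplicatively by at most $1+\varepsilon$, simultaneously across all calls, with probability at least $1-\delta$. The whole argument hinges on a structural invariant of the main loop, so I would establish that first.

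First I would call a region \emph{active} if its triple $(constraints,lb,ub)$ is currently in $queue$, or was popped but failed the decomposability test (in that case the update $\textit{LB}\gets\textit{LB}-lb$ is skipped, so the region keeps contributing to the running bounds). The claim is that at every point of the execution the cardinality constraints $\phi^{CARD}_c$ of the active regions partition $\mathsf{models}_d(\phi)$: this holds initially (the unique coarsest region, with $P\mapsto(0,d^{\mathsf{arity}(P)})$ for each weighted predicate, excludes no model), and splitting preserves it because each interval $[l,u]$ is replaced by the disjoint covering pair $[l,\lfloor\frac{l+u}{2}\rfloor],\,[\lfloor\frac{l+u}{2}\rfloor+1,u]$ and \textsf{DictProduct} forms their Cartesian product. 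Consequently, at termination $\textit{LB}=\sum_R \textit{tMin}_R\cdot \textit{mc}_R$ and $\textit{UB}=\sum_R \textit{tMax}_R\cdot \textit{mc}_R$, the sums over active regions, with $\textit{mc}_R$ the oracle value returned for $R$. Next, for a region $R$ with constraint $c$, every model $\omega\models\phi\wedge\phi^{CARD}_c$ has weight $\prod_P w(P)^{n_P}\bar w(P)^{d^{\mathsf{arity}(P)}-n_P}$ with each $n_P$ in its allowed interval; since the weights are nonnegative, each factor is monotone in its exponent, so the endpoint minima/maxima that the inner loop multiplies together satisfy $\textit{tMin}_R\le\mathsf{weight}(\omega)\le\textit{tMax}_R$. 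Summing over $\omega$ and then over the partition of active regions yields the \emph{exact inequality}
\begin{equation*}
\textstyle\sum_R \textit{tMin}_R\cdot F_R \;\le\; \mathsf{WFOMC}(\phi,d,w,\bar w)\;\le\;\textstyle\sum_R \textit{tMax}_R\cdot F_R,\qquad F_R:=\mathsf{FOMC}(\phi\wedge\phi^{CARD}_{c_R},d),
\end{equation*}
which I would re-derive for an arbitrary partition (rather than quoting (i) verbatim), since the approximate run may split differently from an exact one.

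For the probabilistic part I would realise each \textsf{FOMC} call by grounding $\phi\wedge\phi^{CARD}_{c_R}$ and running \textsf{ApproxMC3} projected onto the original ground atoms $S$ --- legitimate because the Bailleux--Boufkhad auxiliary variables form a dependent support, so $|R_{F\downarrow S}|=F_R$ --- and let $E$ be the event that every call made during the run returns a value in its interval $[F_R/(1+\varepsilon),(1+\varepsilon)F_R]$. Enumerating the calls in the order they are issued, Theorem~\ref{thm:basic} bounds the $i$-th call's failure probability by $\delta_i$ \emph{conditionally} on the execution history (the formula it runs on is a function of that history, its randomness is independent of it), so a union bound over calls gives $\Pr[E]\ge 1-\sum_i\delta_i = 1-\delta$. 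On $E$ we have $F_R/(1+\varepsilon)\le \textit{mc}_R\le(1+\varepsilon)F_R$ for every active $R$, and since $\textit{tMin}_R,\textit{tMax}_R\ge 0$,
\begin{align*}
\textit{LB}=\textstyle\sum_R \textit{tMin}_R\,\textit{mc}_R \;&\le\;(1+\varepsilon)\textstyle\sum_R \textit{tMin}_R\,F_R\;\le\;(1+\varepsilon)\,\mathsf{WFOMC}(\phi,d,w,\bar w),\\
\textit{UB}=\textstyle\sum_R \textit{tMax}_R\,\textit{mc}_R \;&\ge\;\tfrac{1}{1+\varepsilon}\textstyle\sum_R \textit{tMax}_R\,F_R\;\ge\;\tfrac{1}{1+\varepsilon}\,\mathsf{WFOMC}(\phi,d,w,\bar w),
\end{align*}
using the exact inequality; rearranging gives $\textit{LB}/(1+\varepsilon)\le\mathsf{WFOMC}(\phi,d,w,\bar w)\le\textit{UB}(1+\varepsilon)$ on $E$, i.e.\ with probability at least $1-\delta$, as required.

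The step I expect to be the main obstacle is the union bound: the number of oracle calls and the formulas they are run on are chosen adaptively from the random outputs of earlier calls, so the argument must be phrased conditionally on the history (each call fails with probability at most $\delta_i$ given the past) rather than treating the calls as a fixed independent family, and one needs termination of the loop so that $\sum_i\delta_i$ is a genuine finite sum. The only other point requiring care is the bookkeeping in the partition invariant --- in particular that a popped-but-indecomposable region is still counted as active --- so that $\textit{LB}$ and $\textit{UB}$ really are sums over an exact partition of $\mathsf{models}_d(\phi)$.
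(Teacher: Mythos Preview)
Your proof is correct and follows essentially the same route as the paper: express $\textit{LB}$ and $\textit{UB}$ as nonnegative linear combinations of the oracle outputs, apply a union bound over the \textsf{ApproxMC3} calls, and invoke the exact-oracle correctness of the bounds. You are in fact more careful than the paper, which simply asserts that with exact counts the bounds bracket $\mathsf{WFOMC}$ ``from a simple inspection of the pseudocode'' and never addresses the adaptivity of the calls; your partition invariant and conditional union bound fill in precisely those details.
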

\begin{proof}
Let $M$ denote the number of calls to $\textsf{ApproxMC3}$ made by $\textsf{ApproxWFOMC}$ and let $c_i$ denote the output of the $i$-th call to $\textsf{ApproxMC3}$. Observe that at any point in $\textsf{ApproxWFOMC}$'s run both $\textit{LB}$ and $\textit{UB}$ are weighted sums of the outputs of $\textsf{ApproxMC3}$: $\textit{UB} = \sum_{i=1}^M \gamma_i \cdot c_i$ and $\textit{LB} = \sum_{i=1}^M \gamma_i' \cdot c_i$ (the values of the coefficients $\gamma_i$ and $\gamma_i'$ are not important for the purposes of the proof). Next let $c_i^*$ denote the true model count corresponding to the approximate value $c_i$ returned by $\textsf{ApproxMC3}$ and let $\textit{UB}^* = \sum_{i=1}^M \gamma_i \cdot c_i^*$ and $\textit{LB}^* = \sum_{i=1}^M \gamma_i' \cdot c_i^*$ be the respective bounds returned by $\textsf{ApproxWFOMC}$. It follows from the guarantees on $\textsf{ApproxMC3}$ (Theorem \ref{thm:basic}) that the probability that $c_i \not\in [\frac{c_i^*}{1+\varepsilon}, (1+\varepsilon) \cdot c_i^*]$ is no greater than $\delta_i$. Then by the union bound, we have that the probability that at least one $c_i \not\in [\frac{c_i^*}{1+\varepsilon}, (1+\varepsilon) \cdot c_i^*]$ is at most $\delta = \sum_{i=1}^M \delta_i$. Hence, with probability at least $1-\delta$ it holds $\textit{LB} \leq (1+\varepsilon) \textit{LB}^*$ and $\frac{\textit{UB}^*}{1+\varepsilon} \leq \textit{UB}$ from which we then have $\frac{\textit{LB}}{(1+\varepsilon)} \leq \textit{LB}^*$ and $\textit{UB}^* \leq (1+\varepsilon) \textit{UB}$. Next it follows from a simple inspection of the pseudocode of $\textsf{ApproxWFOMC}$ that $\textit{LB}^* \leq \textsf{WFOMC} \left( \phi, w, \bar{w}, d \right) \leq \textit{UB}^*$. Using the probabilistic bounds just derived for $\textit{LB}$ and $\textit{UB}$, we obtain that $\frac{\textit{LB}}{ 1 + \varepsilon } \leq \textsf{WFOMC} \left( \phi, w, \bar{w}, d \right) \leq \textit{UB} \left( 1 + \varepsilon \right)$ with probability at least $1-\delta$.
\end{proof}

One remaining question is how to set the values of the $\delta_i$'s. One possibility is to set $\delta_i := \delta/{M}_{\textit{max}}$ where ${M}_{\textit{max}}$ is the theoretical maximum number of calls to \textsf{ApproxMC3}, and $\delta$ is a confidence parameter set by the user. Another possibility that may be preferable in practice is to set $\delta_i := \delta /(i \cdot \ln{(M_\textit{max}+1)})$. Since the sum of the first $M_\textit{max}$ elements of the harmonic series is at most $1/\ln(M_\textit{max}+1)$, we will always have $\sum_{i=1}^{M} \delta_i \leq \delta$ for $M \leq M_\textit{max}$. Here we are exploiting the fact that our algorithm will often use much fewer calls to \textsf{ApproxMC3} than $M_\textit{max}$. Let $M$ be the number of calls to \textsf{ApproxMC3} made by \textsf{ApproxWFOMC}. Since, as asserted by Theorem \ref{thm:basic}, the number of calls to a \textsf{SAT} oracle made by \textsf{ApproxMC3} is $\mathcal{O}\left(\frac{\log(|S|)\log(\frac{1}{\delta})}{\varepsilon ^ 2}\right)$, \textsf{ApproxWFOMC} will need $\mathcal{O}\left(M \cdot \frac{\log(|S|) \cdot (\log(\frac{1}{\delta}) + \log M + \log (\ln(M_\textit{max}+1)))}{\varepsilon ^ 2}\right)$ calls to a \textsf{SAT} oracle.

\section{Implementation and Experiments}
We have implemented our algorithm and tested it on encodings of Markov logic networks (MLNs) \cite{DBLP:journals/ml/RichardsonD06} and ProbLog programs \cite{DBLP:journals/tplp/FierensBRSGTJR15}. We implement the \textsf{FOMC} oracle by using \textsf{ApproxMC3} as described earlier. In this section, we briefly review each first-order model we perform our experiments on and show how each can be cast as a WFOMC task, in both cases following the encodings of \citeauthor{DBLP:conf/kr/BroeckMD14}~\shortcite{DBLP:conf/kr/BroeckMD14}. We follow with an analysis of our experimental results.

\begin{figure*}[!h]
\begin{subfigure}{.3\textwidth}
\centering
		\begin{tikzpicture}[scale=0.7]
			\begin{semilogyaxis}[
			    xlabel={Domain size},
			    ylabel={Runtime (s)},
			    xmin=2, xmax=6,
			    ymin=0, ymax=100000,
			    xtick={2,3,4,5,6},
			    ytick={0.01,0.1,1,10,100,1000,10000,100000},
			    legend pos=north west,
			    ymajorgrids=true,
			    grid style=dashed,
			]
			 
			\addplot[
			    color=blue,
			    mark=square,
			    ]
			    coordinates {
			    (2,0.016)(3,0.40)(4,53.10) 
			    };
			    \addlegendentry{SDD}
		   	\addplot[
			    color=red,
			    mark=triangle,
			    ]
			    coordinates {
			    (2,0.27*7)(3,3.14*8.47)(4,30.84*9.45)(5,251.07*11)(6,2809.45*12.4)
			    };
			    \addlegendentry{\textsf{ApproxWFOMC}}
			 
			\end{semilogyaxis}
		\end{tikzpicture}%
\caption{Runtime of various WFOMC methods for the \texttt{transitive-smokers-mln} problem for various domain sizes.}
\label{fig:domainscaletime-mln}
\end{subfigure}%
\hfill\begin{subfigure}{.3\textwidth}
\centering
		\begin{tikzpicture}[scale=0.7]
			\begin{axis}[
			    xlabel={Domain size},
			    ylabel={Number of \textsf{ApproxMC3} calls},
			    xmin=2, xmax=6,
			    ymin=0, ymax=250,
			    xtick={2,3,4,5,6},
			    ytick={50,100,150,200,250},
			    legend pos=north west,
			    ymajorgrids=true,
			    grid style=dashed,
			]
			 
			\addplot[
			    color=blue,
			    mark=square,
			    ]
			    coordinates {
			    (2,23)(3,43)(4,85)(5,141)(6,229)
			    };
			    \addlegendentry{\texttt{transitive-smokers-mln}}
			\addplot[
			    color=red,
			    mark=triangle,
			    ]
			    coordinates {
			    (2,11)(3,13)(4,21)(5,23)(6,33)
			    };
			    \addlegendentry{\texttt{conference-problog}}
			 
			\end{axis}
		\end{tikzpicture}%
\caption{Number of \textsf{ApproxMC3} calls made by \textsf{ApproxWFOMC} with $\tau = 0.2$.} 
\label{fig:domainscalefomc}
\end{subfigure}%
\hfill\begin{subfigure}{.3\textwidth}
\centering
		\begin{tikzpicture}[scale=0.7]
			\begin{semilogyaxis}[
			    xlabel={Domain size},
			    ylabel={Ratio},
			    xmin=2, xmax=6,
			    ymin=0.001, ymax=10,
			    xtick={2,3,4,5,6},
			    ytick={0.001,0.01,0.1,1,10},
			    legend pos=north west,
			    ymajorgrids=true,
			    grid style=dashed,
			]
			 
			\addplot[
			    color=blue,
			    mark=square,
			    ]
			    coordinates {
			    (2,0.17037037)(3,0.038392857)(4,0.015384615)(5,0.007173382)(6,0.004074516) 
			    };
			    \addlegendentry{\texttt{transitive-smokers-mln}}
			\addplot[
			    color=red,
			    mark=triangle,
			    ]
			    coordinates {
			    (2,1.222222222)(3,0.8125)(4,0.84)(5,0.638888889)(6,0.673469388) 
			    };
			    \addlegendentry{\texttt{conference-problog}}
			 
			\end{semilogyaxis}
		\end{tikzpicture}%
\caption{Ratio of \textsf{ApproxMC3} calls made by \textsf{ApproxWFOMC} to \textsf{FOMC} oracle calls in the decomposition in Theorem \ref{decomp}.}
\label{fig:ratio}
\end{subfigure}
\caption{}
\end{figure*}

\subsection{Encoding an MLN}
Recall that an MLN comprises a set of tuples $(w, \phi)$, where $w$ is a real-valued weight and and $\phi$ is a first-order formula. For example, consider the MLN below, ``\texttt{transitive-smokers-mln}'':
\begin{align*}
1.22 \quad &\texttt{stress}(X) \rightarrow \texttt{smokes}(X)\\
2.08 \quad &\texttt{friends}(X,Y) \land \texttt{smokes}(X) \rightarrow \texttt{smokes}(Y)\\
0.69 \quad &\texttt{friends}(X,Y) \land \texttt{friends}(X,Z) \rightarrow \texttt{friends}(X,Z)
\end{align*}

The first rule states that people who are stressed are likely to smoke. The second states that smokers tend to make friends with other smokers. The last rule states that the $\texttt{friends}$ relation is typically transitive.

\begin{dfn}
The WFOMC encoding $(\Delta, w, \bar{w})$ of an MLN is constructed as follows: for each tuple $(w_i, \phi_i(\bm{x}_i))$ in the MLN, where $\bm{x}_i$ denotes the free logical variables occurring in $\phi_i$, we introduce an auxiliary predicate $P_i/|\bm{x}_i|$. Then for each formula in the MLN, $\Delta$ is formed by conjoining the sentences $\forall \bm{x}_i P_i \leftrightarrow \phi_i(\bm{x}_i)$. The weighting is defined by setting $w(P_i) = e^{w_i}$, $\bar{w}(P_i) = 1$, and $w(Q) = \bar{w}(Q) = 1$ for all other predicates $Q$.
\end{dfn}

The encoding of the first rule of \texttt{transitive-smokers-mln} earlier is therefore:
\begin{align*}
&\forall X P_1(X) \leftrightarrow (\texttt{stress}(X) \rightarrow \texttt{smokes}(X))
\end{align*}
with $w(P_1) = e^{2.9}$, $\bar{w}(P_1) = 1$, and $w(\texttt{stress}) = w(\texttt{smokes}) = \bar{w}(\texttt{stress}) = \bar{w}(\texttt{smokes}) = 1$.


We can take advantage of some domain-specific knowledge of the MLN encoding in order to further optimize our algorithm when computing the partition function of an MLN. We first recall the definition of an (in)dependent support \cite{DBLP:journals/constraints/IvriiMMV16}.

\begin{dfn}
Let $F$ denote a propositional formula, and let $X$ denote the set of variables appearing in $F$. Then $I \subseteq X$ is said to be an independent support of $F$ if, for any two models $\sigma_1, \sigma_2 \in R_F$ that agree on $I$, we have $\sigma_1 = \sigma_2$. In other words, the truth values of $I$ uniquely determine the truth value of every variable in $X \setminus I$. The remaining variables $X \setminus I$ are called a dependent support.
\end{dfn}


\begin{rmk}\label{mis}
Let $(\Delta, w, \bar{w})$ denote the WFOMC encoding of an MLN. Then, after grounding $\Delta$ over some domain $\bm{D}$, the ground instances of all non-auxiliary predicates form an independent support for the grounding of $\Delta$.
\end{rmk}

Based on the observation in Lemma \ref{mis}, we may pass the ground instances of all non-auxiliary predicates as a \textit{sampling set} to \textsf{ApproxMC3} in every call in our algorithm, and perform projected model counting. This has the effect of shortening the XOR constraints which must be processed by \textsf{ApproxMC3}, and provides a significant speed-up to the model counting times.

Finally, observe that all non-auxiliary predicates have neutral weight, so constraints will only be imposed on the auxiliary predicates.

\subsection{Encoding a Probabilistic Logic Program}
A probabilistic logic program combines a classical logic program with uncertainty. In this paper, we use the ProbLog language, and refer the reader to \citeauthor{DBLP:journals/tplp/FierensBRSGTJR15}~\shortcite{DBLP:journals/tplp/FierensBRSGTJR15} for a complete overview of the syntax and semantics. A ProbLog program is a set of probabilistic facts of the form $p \texttt{::} a$, along with a classical logic program $\phi$ whose rule heads do not contain facts from $F$. For example, consider the program ``\texttt{conference-problog}'' below:
\begin{align*}
0.1\ &\texttt{::}\ \texttt{Attends}(X).\\
0.3\ &\texttt{::}\ \texttt{ToSeries}(X).\\
\texttt{Series}\ &\texttt{:-}\ \texttt{Attends}(X), \texttt{ToSeries}(X).
\end{align*}
%

\begin{dfn}
The WFOMC encoding $(\Delta, w, \bar{w})$ of a tight ProbLog program $\gamma$ is constructed by setting $\Delta$ as Clark's completion of the program, setting $w(\textsf{pred}(\alpha)) = P(\alpha), \bar{w}(\alpha) = 1 - P(\alpha)$ for each probabilistic fact $p \texttt{::} a$, and setting the weight of all other predicates to neutral.
\end{dfn}

Note that the transformation above comes with some caveats: in particular, the formula corresponding to our example program \texttt{conference-problog} contains existential quantification, but we assume that the CNF provided to \textsf{ApproxWFOMC} contains only universally quantified clauses. To overcome this, we employ the Skolemization procedure of \citeauthor{DBLP:conf/kr/BroeckMD14}~\shortcite{DBLP:conf/kr/BroeckMD14}. However, this creates negative weights in the CNF, causing problems with the computation of the lower and upper bounds in the algorithm. To avoid this issue, we may simply treat the Skolem predicate whose negative weight is $-1$ as a neutral predicate, thus not taking it into account in the computation of the upper and lower bounds. Moreover, we project away the auxiliary atoms resulting from the Skolem and Tseitin predicates, avoiding any extraneous models they create.

\subsection{Experimental Results}

We tested \textsf{ApproxWFOMC} on the WFOMC encodings of the \texttt{transitive-smokers-mln} program and \texttt{conference-problog}. We set out to answer the following questions:
\begin{enumerate}
    \item How does the performance of \textsf{ApproxWFOMC} on first-order probabilistic models compare to solving the same problem using exact knowledge compilation (SDDs)?
    \item How does the number of \textsf{FOMC} oracle calls needed by \textsf{ApproxWFOMC} scale with the domain size?
    \item How significant of an improvement does the search method proposed by \textsf{ApproxWFOMC} yield over the decomposition in Theorem \ref{decomp}, in terms of the number of \textsf{FOMC} oracle calls?
\end{enumerate}


\noindent We investigate each question individually.

\paragraph*{Q1} In Figure \ref{fig:domainscaletime-mln}, we show how the domain size affects the runtime of \textsf{ApproxWFOMC} and compare it to the SDD library \cite{DBLP:conf/aaai/ChoiD13}, called via the wrapper PySDD. Although SDDs outperform \textsf{ApproxWFOMC} with domain size 2, 3 and 4, \textsf{ApproxWFOMC} performs better with larger domains, with SDD compilation running out of memory already with a domain of size 5.

\paragraph*{Q2} In Figure \ref{fig:domainscalefomc}, we show how the number of \textsf{ApproxMC3} calls made by \textsf{ApproxWFOMC} increases as the domain size scales for a fixed tolerance value of 0.2. We observe that the number of calls grows quicker in the domain size for the \texttt{transitive-smokers-mln}, due to the larger number of predicates with higher arities appearing in this problem.

Notice that, despite a modest increase in the number of \textsf{ApproxMC3} calls when the domain size goes from 5 to 6, we see a significant increase in runtime for \textsf{ApproxWFOMC}. Thus, each approximate model counter call takes longer to return, especially as the problems become increasingly constrained with tighter cardinality bounds. This leads to the natural question of how cardinality constraints can be more efficiently incorporated into model counters, which we discuss further in the next section.

\paragraph*{Q3} In Figure \ref{fig:ratio}, we show the efficiency gain of \textsf{ApproxWFOMC} over using a simple decomposition of the form in Theorem \ref{decomp}, by quantifying the ratio between the number of \textsf{FOMC} oracle calls made by \textsf{ApproxWFOMC} to the number needed in the decomposition. We see that, in the case of more challenging MLN problem, the ``efficiency ratio'' improves significantly as domain size increases. The effect is less clear for the comparatively simpler encoding of \texttt{conferences-problog}.





\section{Further Work}
There are still many avenues for further research. One barrier in particular is the performance of \textsf{ApproxMC3} on highly constrained formulas, and in particular those we observe in our setting with highly restrictive cardinality constraints. This raises the question of whether it is possible to integrate support for these constraints into the model counter itself. The phase transition behaviour of 1-CARD-XOR formulas (the conjunction of a number of XOR clauses with a single cardinality constraint) was recently investigated by \citeauthor{DBLP:conf/ijcai/PoteJM19}~\shortcite{DBLP:conf/ijcai/PoteJM19}, paving a theoretical foundation for understanding the runtime of existing solvers. However, to the best of our knowledge, there currently exists no specialised solver for handling CNF formulas subject to both cardinality and XOR constraints. If this gap was filled and integrated into \textsf{ApproxMC3}, we could see significant gains in the performance of our approach.


\section{Conclusion}
We introduced \textsf{ApproxWFOMC}, an anytime approximate WFOMC algorithm with PAC guarantees, and showed how it can be applied to inference in MLNs and probabilistic logic programs. Initial results are promising, showing that it is able to scale to domain sizes that are too large for existing exact methods.

\vspace{0.2cm}
\noindent{\bf Acknowledgments.} TvB's work is supported by the Research Foundation -- Flanders (grant G095917N). OK's work has been supported by the OP VVV project {\it CZ.02.1.01/0.0/0.0/16\_019/0000765} ``Research Center for Informatics'' and a donation from X-Order Lab. Part of this work was done while OK was already supported by the Czech Science Foundation project ``Generative Relational Models'' (20-19104Y). Both authors thank Luc De Raedt for helpful feedback on this paper. 

\bibliography{starai-paper}
\bibliographystyle{aaai}
\end{document}